%% file: main.tex
\documentclass{article}

\usepackage[preprint]{neurips_2026}

\usepackage[utf8]{inputenc} 
\usepackage[T1]{fontenc}    
\usepackage{hyperref}       
\usepackage{url}            
\usepackage{booktabs}       
\usepackage{amsfonts}       
\usepackage{nicefrac}       
\usepackage{microtype}      
\usepackage{xcolor}         
\usepackage{amsmath}
\usepackage{graphicx}
\usepackage{bbm}
\usepackage{amssymb}
\usepackage{amsthm}
\usepackage{wrapfig}
\usepackage[ruled,vlined]{algorithm2e}
\usepackage{amsmath,amssymb}

\usepackage[english]{babel}
\newtheorem{theorem}{Theorem}[section]

\DeclareMathOperator*{\argmax}{arg\,max} 

\title{Encoding Robust Topological Signatures for Hyperdimensional Computing}

\author{%
  Arpan Kusari\\
  University of Michigan Transportation Research Institute\\
  University of Michigan\\
  Ann Arbor, MI 48109 \\
  \texttt{kusari@umich.edu} \\
}

\begin{document}

\maketitle

\begin{abstract}
Hyperdimensional (HD) computing offers an attractive alternative to deep networks for edge learning due to its simplicity, fast prototype-based inference, and compatibility with online updates. However, standard pixel-based HD encoders are brittle: small distribution shifts such as rotation, noise, or occlusion can drastically reduce accuracy. We extract discrete topological primitives-most notably holes-from binarized shapes and pair them with rotation/translation/scale (RTS)–invariant shape signatures. Our method constructs RTS–stable descriptors for (i) the outer shape using a spatial-pyramid variant of Zernike moments and (ii) each hole using an intrinsic Fourier descriptor of its radial signature together with RTS-canonical relative geometry. Each primitive is mapped to a bipolar hypervector via randomized projection and role binding, and variable-cardinality hole sets are aggregated by permutation-invariant bundling to form a single image hypervector. To avoid over-weighting any cue, we learn nonnegative reliability weights for the Zernike and hole channels on a validation set via late fusion of cosine similarities. Experiments on MNIST and EMNIST under controlled corruptions (rotation, Gaussian noise, salt-and-pepper, cutout, zoom) show that Topology-guided HD computing substantially improves robustness compared with a naive HD baseline, maintaining high accuracy across multiple corruption families and benefiting from lightweight online training. Compared with a compact CNN trained on clean data, our method achieves competitive clean accuracy while offering markedly stronger robustness to several pixel-level corruptions, demonstrating that explicit topological structure is a practical route to robust HD representations. The code is provided \href{https://github.com/arpan-kusari/Topological-HDC}{here}.
\end{abstract}

\section{Introduction}
Hyperdimensional (HD) computing has emerged as a compelling computational analogue to neural architectures, mapping data into very high-dimensional spaces to perform tasks such as classification and regression. A central design objective of HD computing is to achieve strong performance with extreme memory and resource efficiency, while remaining robust to large-scale corruption and noise in the input \citep{kanerva2009hyperdimensional, rahimi2016robust, kleyko2023survey}. Notably, early artificial neural networks were motivated by similar constraints—e.g., the multi-layer perceptron (MLP) \citep{rosenblatt1958perceptron}—but contemporary deep learning systems have largely shifted away from these goals, trading compactness and efficiency for depth, parameter count, and substantial compute. In parallel, it is now well established that modern deep neural networks can be fragile: even small, structured perturbations may trigger severe misclassification \citep{nguyen2015deep}. Against this backdrop, HD computing offers an alternative paradigm in which noise tolerance and efficiency are preliminary considerations.

At the core of HD computing is an encoding step that maps raw inputs into a high-dimensional representation, formalized as a function $\phi: \chi \rightarrow \mathcal{H}$. For images, a straightforward (pixel-based) encoding strategy represents each pixel by combining information about its position and its intensity \citep{smets2024encoding}. Given an image of size s with intensity levels I, $\mathcal{M} := \{(p_i, I(p_i)): 0 \leq i \leq s, 0 \leq I \leq I_{max}\}$. An HD encoder then assigns hypervectors  to positions $\phi_p$ and $\phi_I$ to intensities, and aggregates them into a single image representation via binding and bundling:
\begin{equation}
    \phi_{\mathcal{M}} = \oplus\Large (\phi_p \otimes \phi_I \Large) 
\end{equation}
where the binding operator $\otimes$ couples the position and intensity hypervectors into a joint descriptor for each pixel, and the bundling operator $\oplus$ combines these descriptors across the full image.

Although this naive pixel-based encoder-and subsequent variants such as Fourier Holographic Reduced Representation \citep{verges2025learning}-can implicitly capture aspects of global structure (or ``shape”) in the sense of \cite{carlsson2020topological}, prior HD approaches have not, to the best of our knowledge, explicitly encoded topological signatures of image data. This omission is particularly relevant in computer vision, where topology has been shown to be intrinsic to natural image statistics. For example, \cite{carlsson2008local} demonstrated that naturally occurring image patches concentrate near an embedded three-dimensional manifold with the topology of a Klein bottle. 

\begin{figure}
    \centering
    \includegraphics[width=1.0\linewidth]{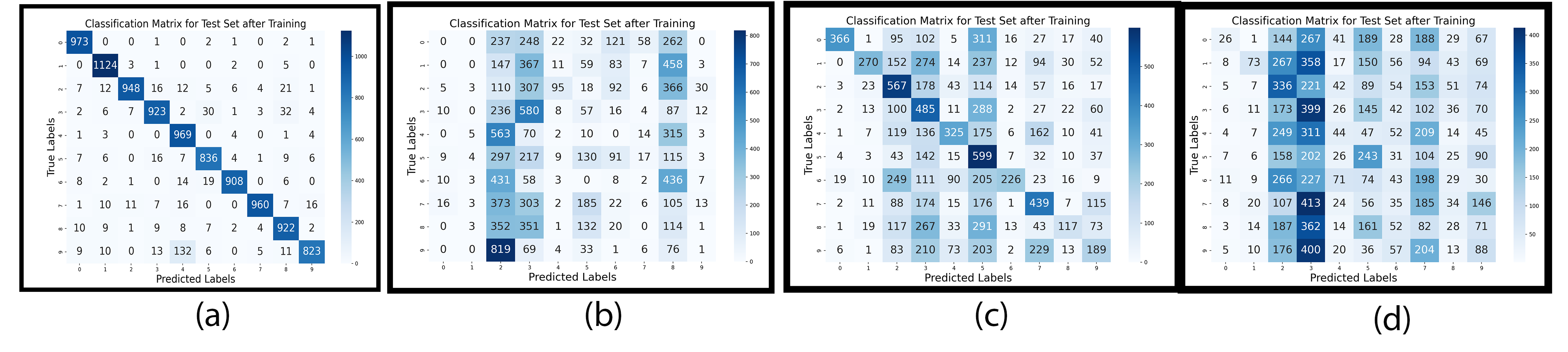}
    \caption{Confusion matrices of MNIST dataset classification after training for 20 epochs using naive HDC implementation with the following configurations: (a) Clean test dataset with accuracy of 0.95; (b) adding gaussian noise of $\sigma=0.1$ results in accuracy of 0.09; (c) salt-and-pepper noise for 10\% of pixels with accuracy of 0.65; and (d) zoom of 0.75 with accuracy of 0.15. The confusion matrices clearly show mode collapse occurring due to added noise. }
    \label{fig:naive_composite}
\end{figure}

To further motivate our case, we run experiments on the HD classification of MNIST dataset (see Fig. \ref{fig:naive_composite} for confusion matrices and descriptions). We find that while the naive HD classifier can provide a very high accuracy in the case of clean test data, the accuracy drops drastically with some corruption added (gaussian, salt-and-pepper, zooming the digits etc). 

Driven by these empirical results, we propose an HD encoding framework that explicitly represents discrete topological primitives derived from images, with the specific aim of improving robustness under substantial data corruption. The most fundamental topological features of the shape present in the image are the number of connected components, number of holes and its higher dimensional counterparts \citep{zhou2017exploring}. The topological primitives we target are holes and relative geometry, which we combine with conventional shape channels, and classify with lightweight prototype learning. 

The goal in creating such an explicit parametrization of the shape topology is to provide invariance under specific affine transformations such as rotation, translation and scaling (RTS). Topology of a shape (for e.g. number of holes, connectivity) is invariant under any continuous deformation, known as homeomorphism. Any manner of RTS-transformation is continuous and preserve hole count, containment, and adjacency. So using holes as discrete primitives makes it easy to build RTS-invariant features: count holes, encode hole-vs-outer relations, and canonicalize coordinates.
We provide theoretical justification that (i) our primitives are RTS-invariant by construction and (ii) HD bundling/prototypes are stable under token noise and bit flips, which provides robustness observed under corruptions.  

We would like to note that the idea that machine learning models explicitly or implicitly learn the topological structure of the data has been studied in some detail. For example,  \cite{carlsson2020topological} showed that adding static topology-inspired image filters (based on a second-embedded Klein bottle) can improve robustness under heavy corruption, while \cite{naitzat2020topology} found that DNN feature representations become topologically simpler across layers, evidenced by reductions in Betti numbers. Building on these insights, we explicitly construct topological features within HD computing to improve robustness.

\section{Topological signature encoding}
We cast the problem of image classification as a shape identification problem - \textit{given different observations of an object under nuisance transformation and noise, our HD classifier has to decide whether they represent the same underlying shape.} 

For a grayscale image $x: \Omega \rightarrow \mathbb{R}$, the true shape is a compact subset $\mathcal{A} \subset \mathbb{R}^2$ of the foreground region. An observed image produces an estimate $\mathcal{\hat{A}}$ via thresholding/cleanup. Shape identification typically considers the equivalence class 

\begin{equation}
    [\mathcal{A}] = {g(\mathcal{A}): g \in G}
\end{equation}

where $G$ is the nuisance group. Our goal is to construct an ideal map 

\begin{equation}
    \Psi (\mathcal{A}) \in \{-1, +1\}^D
\end{equation}
such that $\Psi(g(\hat{\mathcal{A}})) \approx \Psi(\hat{\mathcal{A}})$ for all $g \in G$ and $\Psi$ is discriminative across different classes of shapes. 

\begin{wrapfigure}
    {r}{0.25\textwidth}
  \begin{center}
\includegraphics[width=0.25\textwidth]{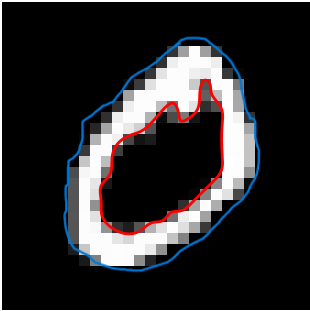}
  \end{center}
  \caption{Outer contour (blue) and hole contour (red).}
  \label{fig:outer_hole_contour}
\end{wrapfigure}

The shape $\mathcal{A}$ in our case could be framed in two parts - global geometry through the shape of outer boundary $\mathcal{A}_O$ and geometry of the holes (if any) present inside the shape, $\mathcal{A}_{\mathcal{H}_i}$, where $\mathcal{H}_i$ represents the $i^{th}$ hole, $0 \leq \mathcal{H}_i \leq \mathcal{H}_{max}$. The outer contour is the usual object silhouette information which represents the boundary/appearance-based invariants commonly used in classical shape matching. 

Holes, on the other hand, have been utilized as primary topological features (Betti-1 in the binary region) due to their invariance under any affine transformation \citep{edelsbrunner2008persistent}. However, the location and shape of the individual hole is dependent on the outer contour, i.e. the scale of the overall shape dictates the relative location of the hole. Therefore, the first step is to find the existence of holes via connected component analysis. Given that there are holes present, we need to fix the relative location of the individual hole using an RTS-canonical frame of the outer contour and finally, provide the intrinsic shape of the hole using rotation/scale-invariant Fourier magnitudes of the radial signature (Fig. \ref{fig:outer_hole_contour} shows the outer and hole contours). The methodology is described in detail below:

\subsection{Discrete Topological Primitives + Set Encoding}
We focus on \emph{discrete topological primitives}-connected components, holes, and contours-rather than persistent homology. For an image $x$, we extract a multiset of primitives
\[
\mathcal{P}(x)=\{(t_i, \mathcal{G}_i, w_i)\}_{i=1}^{m(x)},
\]
where $t_i$ is a \emph{type tag} (e.g., outer contour, hole contour, connected component), $\mathcal{G}_i$ is the primitive geometry (e.g., a sampled contour/spline or a component mask) and $w_i$ is the reliability weight associated with the type tag. This yields a representation analogous to ``bag-of-parts” style pooling, where repeated or consistent primitives reinforce, thus building a global signature that is stable to permutations and moderate changes in the primitive set. Compared to an explicit correspondence problem, where each primitive has to align, the set representation accommodates \textit{variable cardinality}: shapes having different number of holes have similar representation dimension; and is robust to \textit{set corruption}: difference in shape of a hole does not lead to drastically different results. 

Each primitive is mapped to a real-valued feature vector via a canonicalized descriptor
\[
\Psi(\mathcal{G}_i; x)\in\mathbb{R}^k,
\]
designed to be invariant to translation/rotation/scale (RTS) by construction. We then encode each primitive into a hypervector
\[
h_i \;=\; \mathbf{r}(t_i)\odot \Phi(\Psi(\mathcal{G}_i;x))\in\{-1,+1\}^D,
\]
where $\mathbf{r}(t_i)\in\{-1,+1\}^D$ is a role vector determined by the primitive type and $\Phi:\mathbb{R}^k\to\{-1,+1\}^D$ is a continuous-to-hypervector encoder (e.g., normalized random projections / random hyperplane sign). The final representation is the \emph{set hypervector}
\[
\mathbf{HV}(x)\;=\;\mathrm{sign}\!\left(\sum_{i=1}^{m(x)} w_i\, h_i\right),
\]
which is permutation-invariant in the ordering of extracted primitives and naturally supports variable-sized sets. We now describe the different primitive construction in some detail. 

\subsection{Outer shape extraction and moment descriptors}
For the grayscale image $x$, we isolate the foreground pixels using a global or adaptive threshold map $T : \Omega \rightarrow \mathbb{R}$ to form a binary mask $b(p),    p \in \Omega$, whereby $\mathcal{C}(b)$ provide the set of connected components of $\{p \in \Omega: b(p) = 1\}$. The shape is then given by the largest connected component
\begin{equation}
    \mathcal{A} = \argmax_{\mathcal{C} \in \mathcal{C}(b)} |C|,     b_\mathcal{A} (p) = \mathbbm{1} \{p \in \mathcal{A}\}.
\end{equation}
 A continuous region is then obtained via pixel embedding, 

\begin{equation}
    \tilde{Re} = \bigcup_{(i,j) \in \mathcal{A}} \Bigg( \Big[i - \dfrac{1}{2}, i + \dfrac{1}{2}\Big] \times \Big[j - \dfrac{1}{2}, j + \dfrac{1}{2}\Big] \Bigg) \subset \mathbb{R}^2.
\end{equation}

The outer contour is then defined as the outer boundary component of $\partial \tilde{Re}$. Let $\partial\Omega$ denote the set of border pixels. Let $\mathcal{C}(\bar b)$ be the set of connected components of $\{p:\bar b(p)=1\}$, and define
\begin{equation}
\mathcal{H}(\mathcal{A})
=
\Big\{
\mathcal{H}\in\mathcal{C}(\bar b)\;:\; \mathcal{H}\cap \partial\Omega=\emptyset
\Big\}.
\end{equation}
Each $\mathcal{H}\in\mathcal{H}(\mathcal{A})$ is a hole region. For each hole region $\mathcal{H}$, we extract its boundary (contour) as an isocontour at level $1/2$ of the indicator image of $\mathcal{H}$ (marching squares), yielding an ordered set of points
\begin{equation}
C_\mathcal{H}=\{c_1,\dots,c_{n_\mathcal{H}}\}\subset\mathbb{R}^2,
\end{equation}
which we treat as a closed polygonal curve (with $c_{n_\mathcal{H}+1}=c_1$).

Let $C_O=\{o_1,\dots,o_{n_0}\}\subset\mathbb{R}^2$ denote the outer contour of the foreground (e.g., the longest extracted contour). We define a canonical RTS frame using the outer contour. First the outer centroid $c =\frac{1}{n_0}\sum_{i=1}^{n_0} o_i \in\mathbb{R}^2,$ and a scale $s = \sqrt{\frac{1}{n_0}\sum_{i=1}^{n_0}\|o_i-c\|_2^2} + \varepsilon,$ is computed, where $\varepsilon>0$ prevents division by zero. To fix rotation, we compute the principal axis of the centered contour points. Let $O=[o_1-c;\dots;o_{n_0}-c]\in\mathbb{R}^{n_0\times 2}$ and define the covariance
\begin{equation}
\Sigma=\frac{1}{n_0}O^\top O \in\mathbb{R}^{2\times 2}.
\end{equation}
Let $v\in\mathbb{R}^2$ be the unit eigenvector associated with the largest eigenvalue of $\Sigma$, and let $\theta=\mathrm{atan2}(v_2,v_1)$. The canonical rotation is
\begin{equation}
R
=
\begin{bmatrix}
\cos(-\theta) & -\sin(-\theta)\\
\sin(-\theta) & \cos(-\theta)
\end{bmatrix}.
\end{equation}
Since PCA orientation is ambiguous up to sign ($v$ and $-v$), we apply a deterministic sign-fix (e.g., choose the sign such that the skewness of projections onto $v$ is positive), yielding a unique $R$ in practice.

Given any point $p\in\mathbb{R}^2$ (e.g., a hole centroid), its RTS-canonical coordinates are
\begin{equation}
\mathrm{canon}(p)
=
R\,\frac{(p-c)}{s}\in\mathbb{R}^2.
\end{equation}

In order to preserve invariance under RTS, we then use Zernike moments as global descriptors of the area within $C_O$ \citep{novotni2004shape, li2008complex} defined as:
\begin{equation}
    Z_{n,m}(f) = \dfrac{n+1}{\pi} \int_{\rho \leq 1} f(\rho, \theta) \overline{V_{n,m}(\rho, \theta)} dA
\end{equation}
where Zernike moment is of order $(n, m)$ with $f(\rho, \theta)$ being the normalized intensity function over the unit disk obtained by warping the region inside $C_O$ into polar coordinates and $\overline{V_{n,m}(\rho, \theta)}$ being the orthogonal radial polynomials. Zernike moments (especially when using magnitudes for rotation invariance) provide a compact summary of the overall mass distribution within the outer shape, but they can be relatively insensitive to fine boundary irregularities, thin strokes, and small localized differences that distinguish visually similar classes.

\paragraph{Spatial pyramid Zernike descriptor} We implement a spatial pyramid version of the Zernike descriptor retaining Zernike moments' strong global invariance properties while recovering spatial layout information by computing moments not only on the full image but also on a coarse grid of subregions. Let $I \in [0, 1]^{\text{height} \times \text{width}}$ denote the normalized grayscale image (after centering/scaling), and let $\mathbf{g} = \{ B_i\}_{i=1}^g$ be a rectangular grid partition of $[1,H] \times [1,W]$ into $g = g_y g_x$ cells. For any region $Re$, let $x_{Re}(I)$ denote the corresponding image patch, embedded into a square array by zero-padding and then resampled to a fixed resolution $P\times P$ (so that Zernike polynomials are evaluated on a consistent discrete unit-disk support). Writing $Z_{n,m}(\cdot;r)\in \mathbb{C}$ for the Zernike moment of order $(n,m)$ computed on a disk of radius $r$ (with $n\leq d$), we define the level-0 (global) feature vector as $\phi_0(I)=(|Z_{n,m}(I;r0)|)_{(n,m) \in \mathcal{\gimel}_d}$ and the level-1 (local) feature vectors as $\phi_i(I)=(|Z_{n,m}(\Pi_{B_i}(I);r1)|)_{(n,m) \in \mathcal{\gimel}_d}$, where $\mathcal{\gimel}_d=\{(n,m):0 \leq n \leq d, |m| \leq n,n-|m| \text{ even}\}$. The final descriptor is the concatenation
\begin{equation}
    \phi_{SPZ}(I)=[\phi_0(I); \phi_1(I); \cdots; \phi_\mathbf{g} (I)] \in \mathbb{R}^{(\mathbf{g}+1)|\mathcal{\gimel_d}|}.
\end{equation}
Using magnitudes $|Z_{n,m}|$ yields rotation invariance within each region (since rotations induce only a phase shift $e^{-im\alpha}$ in $Z_{n,m}$), while the grid decomposition preserves coarse spatial layout by preventing cancellations across distant parts of the shape.

In order to keep the entire spatial pyramid rotation invariant, we estimate the canonical orientation $R$ from the outer contour. We then rotate the normalized patch into that frame before computing grid features: 
\begin{equation}
    I_{can}(p) = I(R^{-1} p).
\end{equation}
Then the grid is effectively attached to the actual object, and concatenating per-cell features becomes approximately rotation invariant. 


\paragraph{HOG descriptor} While Zernike provides the global structure, it lacks the local structure which introduces subtle variations that differentiates images. Thus, we provide the histogram of oriented gradients (HOG) $\phi_{HOG}$ features of the images to capture local, orientation-driven stroke structure that is often attenuated by global moment descriptors. HOG represents an image by histograms of local gradient orientations pooled over spatial cells with block normalization, making it well suited to character-like data where discrimination depends on the presence, direction, and relative placement of edges. 

HOG is naturally robust to moderate illumination/contrast changes and small geometric perturbations due to its local pooling and normalization. Using both descriptors therefore yields a more informative outer-channel representation: Zernike captures global shape geometry in a low-dimensional, rotation-stable form, while HOG supplies high-frequency, local edge cues, improving separability under noise and partial corruption before the features are mapped into the HD space.

\subsection{Hole extraction and Fourier descriptor}
\label{subsec:holes}


\paragraph{Hole geometry features (RTS-invariant relative position)}
For each hole region $\mathcal{H}$, define its centroid
\begin{equation}
\mu_{\mathcal{H}}=\frac{1}{|\mathcal{H}|}\sum_{p\in \mathcal{H}} p \in\mathbb{R}^2,
\end{equation}
and its canonical relative position
\begin{equation}
(\tilde y_\mathcal{H},\tilde x_\mathcal{H})=\mathrm{canon}(\mu_\mathcal{H}).
\end{equation}
By construction, $(\tilde y_\mathcal{H},\tilde x_\mathcal{H})$ is invariant to global translation and isotropic scale, and is approximately invariant to global rotation (up to the stability of the outer-frame estimate).

\paragraph{Invariant hole shape descriptor}
We compute an intrinsic, rotation-invariant descriptor for each hole contour $C_\mathcal{H}$. First, we resample $C_\mathcal{H}$ to a fixed number of points and fit a periodic spline to obtain a smooth, closed parameterization $\gamma_\mathcal{H}:[0,2\pi)\to\mathbb{R}^2$. Let $\{c_j=\gamma_\mathcal{H}(\theta_j)\}_{j=1}^{N}$ be uniform samples with $\theta_j=\frac{2\pi(j-1)}{N}$, and let $\bar c_\mathcal{H}=\frac{1}{N}\sum_{j=1}^{N} c_j$ be the contour centroid. Define the radial signature
\begin{equation}
r_\mathcal{H}(\theta_j)=\|c_j-\bar c_\mathcal{H}\|_2,\qquad j=1,\dots,N.
\label{eq:radial}
\end{equation}
 We then take the first $k$ non-DC Fourier magnitudes (excludes the average, zero-frequency offset) as the hole
shape feature:
\begin{equation}
\phi_{\mathrm{shape}}(\mathcal{H})
=
\big(|\widehat r_\mathcal{H}[1]|,\dots,|\widehat r_\mathcal{H}[k]|\big)\in\mathbb{R}^{k}.
\end{equation}

\paragraph{Per-hole feature vector}
Finally, we concatenate the intrinsic hole shape and canonical relative geometry into a fixed-length per-hole descriptor,
\begin{equation}
f(\mathcal{H})
=
\big[\phi_{\mathrm{shape}}(\mathcal{H});\ \tilde y_\mathcal{H};\ \tilde x_\mathcal{H}\big]\in\mathbb{R}^{k+2}.
\end{equation}

\subsection{Determining the reliability weight of the type tag}
While our working hypothesis is that the topological shape cues—captured by Zernike moments and hole-based descriptors—provide complementary information to HOG (particularly under corruptions such as noise, blur, occlusion, or small geometric distortions), the relative reliability of these feature families is not known a priori and can vary by dataset and corruption type. To avoid degrading performance by over-emphasizing an unreliable cue, we treat the contribution of each descriptor as a weighted component and learn these weights on a held-out validation set.

Let $x$ denote an input image and let 
$c \in \{1, \dots, C\}$ index the class. We compute three hypervector (HV) representations:
\begin{itemize}
    \item $\mathbf{HV}^{hog} \in \{-1, +1\}^D$ from HOG features, 
    \item $\mathbf{HV}^{zer} \in \{-1, +1\}^D$ from Zernike moments, 
    \item $\mathbf{HV}^{hole} \in \{-1, +1\}^D$ from hole features.
\end{itemize}

For each descriptor type $t \in \{\text{hog, zernike, hole}\}$, we maintain class prototypes (accumulators) $\mathbf{P}^t_c \in \mathbb{Z}^D$, learned from the training set (e.g. by bundling and online updates). 

We score each class using cosine similarity between the sample HV and the prototype:

\begin{equation}
    s^t_c (x) = cos(\mathbf{HV}^t(x), \mathbf{P}^t_c) = \dfrac{\langle \mathbf{HV}^t(x), \mathbf{P}^t_c \rangle}{||\mathbf{HV}^t(x)||_2 ||\mathbf{P}^t_c||_2 + \epsilon}
\end{equation}
where $\epsilon$ provides numerical stability. We then combine these scores with non-negative reliability weights $\alpha$ and $\beta$:

\begin{equation}
    S_c(x; \alpha, \beta) = S^{hog}_c(x) + \alpha S^{zer}_c (x) + \beta S^{hole}_c (x).
\end{equation}
The predicted class is given by $\hat{y}(x; \alpha, \beta) = \argmax_c Sc(x; \alpha, \beta)$.

\subsection{Theoretical underpinnings}

\begin{theorem}
    The spatial pyramid outer shape descriptor is invariant to rotation, translation and scale. 
\end{theorem}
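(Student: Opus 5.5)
We work in the idealized continuous setting and ignore discretization and thresholding. The shape is a compact set $\mathcal{A}$, the intensity is $I:\mathbb{R}^2\to[0,1]$, and a similarity transform acts as $g(p)=\lambda Q p+t$ with $\lambda>0$, $Q\in SO(2)$, $t\in\mathbb{R}^2$. The transformed image is $I_g(p)=I(g^{-1}p)$. The strategy is to show that the whole normalization pipeline produces the \emph{same} canonical image for $I$ and $I_g$:
\begin{equation*}
I_{can}(p)=I\big(c+sR^{-1}p\big)\quad\text{and}\quad (I_g)_{can}=I_{can}.
\end{equation*}
Once this holds, every quantity computed afterward is a deterministic function of $I_{can}$. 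That includes the global moment $\phi_0$, the grid cells $B_i$, the patches $\Pi_{B_i}$, the padding and resampling, and the magnitudes $|Z_{n,m}|$. Hence $\phi_{SPZ}(I_g)=\phi_{SPZ}(I)$.

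The equivariance of the frame comes first. The outer contour of $g(\mathcal{A})$ is $g(C_O)$, because $g$ is a homeomorphism that maps boundaries to boundaries and the unbounded complement to the unbounded complement. Taking the contour as a uniformly parametrized curve, its centroid transforms as $c'=\lambda Qc+t$, since the mean is affine-equivariant. The scale transforms as $s'=\lambda s$. This is exact only when $\varepsilon=0$, so we set $\varepsilon\to0$, or equivalently assume $s>0$. The centered contour points are $\lambda Q(o_i-c)$, so the covariance becomes $\Sigma'=\lambda^2 Q\Sigma Q^\top$. Its top eigenvector is therefore $v'=\pm Qv$, and the angles satisfy $\theta'=\theta+\alpha$, where $\alpha$ is the angle of $Q$. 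The sign-fix removes the $\pm$ ambiguity: the projections $\langle \lambda Q(o_i-c),Qv\rangle=\lambda\langle o_i-c,v\rangle$ have the same skewness as the original ones, so the same sign is selected. This gives $R'=RQ^{-1}$.

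Substituting these into the canonicalization shows that the frames cancel:
\begin{equation*}
(I_g)_{can}(p)=I_g\big(c'+s'R'^{-1}p\big)=I\Big(g^{-1}\big(\lambda Qc+t+\lambda Q R^{-1}p\big)\Big)=I\big(c+sR^{-1}p\big)=I_{can}(p).
\end{equation*}
This also shows why the level-1 features are invariant. The grid is laid out on the canonical frame and not on raw pixel coordinates, so it is attached to the object. As a redundant check, $\phi_0$ is invariant on its own, because a residual rotation only multiplies $Z_{n,m}$ by $e^{-im\alpha}$.

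The main obstacle is that the canonical frame is not always well defined. The argument needs two genericity hypotheses, which we will state explicitly as assumptions of the theorem. First, $\Sigma$ must have distinct eigenvalues; otherwise $v$ is undetermined, which happens for example for shapes with $4$-fold symmetry. Second, the skewness along $v$ must be nonzero. If either fails, we fall back to a weaker statement: $\phi_0$ remains invariant, while the level-1 features are invariant only up to the residual symmetry. On the discrete pixel grid, rotation and resampling are not exact, so there the statement holds only approximately, with error vanishing as the resolution $P\to\infty$. We will make this approximation caveat explicit and not claim exactness.
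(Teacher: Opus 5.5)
Your proof is correct, and it takes a genuinely different route from the paper.

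\textbf{The paper's route.} The paper never analyzes the PCA frame. It \emph{assumes} that each pyramid region transforms equivariantly under $tr(p)=aQp+t$. It then normalizes each region by its own centroid $c_{Re}$ and scale $s_{Re}$, which removes translation and scale. The remaining rotation inside each region is handled by the Zernike phase law $Z_{n,m}\mapsto e^{-im\alpha}Z_{n,m}$. Invariance therefore comes from taking magnitudes region by region and concatenating.

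\textbf{Your route.} You prove that the frame itself is equivariant: $c'=\lambda Qc+t$, $s'=\lambda s$, $\Sigma'=\lambda^2Q\Sigma Q^\top$, and the sign-fix is preserved because skewness is invariant under positive rescaling, so $R'=RQ^{-1}$. You deduce that the canonical image is literally the same, which gives invariance of every downstream quantity at once.

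\textbf{What your route buys.}
\begin{enumerate}
\item It derives, rather than postulates, that the grid is attached to the object. The cells are axis-aligned rectangles, so the paper's equivariance assumption is essentially equivalent to the rotation already having been removed. That is the real content of the theorem for the level-1 features.
\item It makes explicit the hypotheses the statement actually needs: distinct eigenvalues of $\Sigma$, nonzero skewness, and $\varepsilon=0$. The paper's phrase ``unique $R$ in practice'' passes over these.
\item It shows that the cell-wise complex moments, not just their magnitudes, are invariant. Magnitudes are then only essential for $\phi_0$, and only if $\phi_0$ is computed before rotating into the frame.
\end{enumerate}

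\textbf{What the paper's route buys.} It is shorter, and its magnitude argument for $\phi_0$ needs no frame at all. That is exactly the fallback you identify.

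\textbf{Two small fixes.}
\begin{enumerate}
\item In your displayed chain the middle term should be $\lambda Qc+t+\lambda s\,QR^{-1}p$. You dropped the factor $s$, though your final expression is right.
\item In the equal-eigenvalue case the residual frame ambiguity is all of $SO(2)$, not a symmetry of the shape. Your fallback therefore gives no real control over the cells there.
\end{enumerate}
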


\begin{proof}
Assume that under any similarity transform $tr(p)=aQp+t$ with $a>0$, $Q\in SO(2)$, and $t\in\mathbb{R}^2$,
the pyramid regions transform equivariantly (i.e., $tr({Re}_i)={Re}_i$ after normalization/resampling), and that
$\mathcal{N}_{Re}$ is computed from $f$ on $Re$ so that $c_{tr(Re)}=tr(c_{Re})$ and $s_{tr(Re)}=a\,s_{Re}$.
Then for the transformed image $f^tr(p)=f(tr^{-1}p)$,

Translation invariance follows because centering by $c_{Re}$ satisfies
$c_{g(Re)}=aQ c_{Re}+t$, hence $(p-c_{g(Re)})/s_{g(Re)} = Q(p'-c_{Re})/s_{Re}$ for $p'=g^{-1}p$.
Scale invariance follows from $s_{g(Re)}=a s_{Re}$, so normalization cancels the factor $a$.
After centering and scaling, a rotation $Q$ acts as a rotation on the unit disk coordinates:
$u\mapsto Qu$. Zernike moments satisfy the standard rotation law

where $Q$ is rotation by angle $\alpha$, hence $|Z_{n,m}|$ is rotation invariant.
Applying the same argument region-wise and concatenating over the pyramid yields the result.

\end{proof}

\begin{theorem}
    The hole shape descriptor is invariant to rotation, translation and scale. 
\end{theorem}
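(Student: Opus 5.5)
The approach is to show that a similarity transform $tr(p)=aQp+t$ with $a>0$, $Q\in SO(2)$, $t\in\mathbb{R}^2$ acts on the radial signature of Eq.~(\ref{eq:radial}) only by a global multiplicative factor $a$ and a cyclic shift of the sample index. The Fourier magnitudes $|\widehat r_\mathcal{H}[1]|,\dots,|\widehat r_\mathcal{H}[k]|$ are insensitive to the shift. The factor $a$ must be removed by normalizing, for instance by the DC term $\widehat r_\mathcal{H}[0]$ (the mean radius), which the descriptor already sets aside. I will state the result for this scale-normalized version $|\widehat r_\mathcal{H}[j]|/|\widehat r_\mathcal{H}[0]|$. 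As written, the raw magnitudes are only rotation/translation invariant and scale-equivariant; I would flag this and adopt the normalization. The relative-position part $(\tilde y_\mathcal{H},\tilde x_\mathcal{H})$ is handled separately via the canonical frame.

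\textbf{Steps.} (i) \emph{Transport of the contour.} The hole region and its level-$1/2$ contour transform as $C_{tr(\mathcal{H})}=tr(C_\mathcal{H})$ in the idealized continuous setting, ignoring pixel discretization and marching-squares effects. An arc-length-uniform resampling and periodic spline fit then commute with $tr$, since similarities scale arc length uniformly by $a$. The only freedom is the starting point, so $\gamma_{tr(\mathcal{H})}(\theta)=tr(\gamma_\mathcal{H}(\theta+\theta_0))$ for some shift $\theta_0$; on the sampled grid this is an index shift $j\mapsto j+\ell$.

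(ii) \emph{Transport of the centroid.} Because $tr$ is affine and the centroid $\bar c_\mathcal{H}$ is an average, $\bar c_{tr(\mathcal{H})}=tr(\bar c_\mathcal{H})$. Hence $\|tr(c_j)-tr(\bar c)\|=\|aQ(c_j-\bar c)\|=a\,r_\mathcal{H}(\theta_j)$. So $r_{tr(\mathcal{H})}[j]=a\,r_\mathcal{H}[j+\ell]$: translation drops out and rotation disappears by orthogonality of $Q$.

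(iii) \emph{Discrete Fourier transform.} By the shift theorem, $\widehat{r}_{tr(\mathcal{H})}[q]=a\,e^{2\pi i q\ell/N}\,\widehat r_\mathcal{H}[q]$. Taking magnitudes removes the phase, and dividing by the DC magnitude removes $a$.

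(iv) \emph{Relative position.} The frame quantities transform as $c\mapsto tr(c)$, $s\mapsto as$ (up to $\varepsilon$), and the principal eigenvector as $v\mapsto Qv$. The sign is resolved by the skewness rule, which is itself equivariant, so $R\mapsto RQ^{-1}$. Together with $\mu_{tr(\mathcal{H})}=tr(\mu_\mathcal{H})$, this gives $\mathrm{canon}(tr(\mu_\mathcal{H}))=RQ^{-1}aQ(\mu_\mathcal{H}-c)/(as)=\mathrm{canon}(\mu_\mathcal{H})$. Concatenating gives invariance of $f(\mathcal{H})$.

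\textbf{Main obstacle.} The delicate step is (i), together with the degenerate case in (iv). Exact commutation of resampling with $tr$ holds only if the resampling is by arc length and the starting point shifts consistently. If it is not, I would instead argue that the spline parameterization of the transformed contour is a reparameterization of the original. In the limit $N\to\infty$, the discrete magnitudes then converge to those of a continuous periodic function, and a cyclic reparameterization is just a shift. For (iv), I would assume $\Sigma$ has distinct eigenvalues and nonzero skewness, so the frame is well defined. I would also state that invariance holds exactly in the continuous idealization and only approximately after pixel discretization, consistent with the caveat in the hole-geometry paragraph.
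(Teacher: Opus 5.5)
Your proof is correct and follows essentially the paper's route: radial signature, the DFT shift theorem so that magnitudes discard the cyclic shift, and a scale normalization that the paper likewise has to introduce inside its proof (division by the RMS energy, which the added $\varepsilon$ makes only approximately invariant, whereas your division by $|\widehat r_\mathcal{H}[0]|$ is exact). Your treatment of rotation is sharper than the paper's: the paper asserts that rotation induces $r_\mathcal{H}(\theta)\mapsto r_\mathcal{H}(\theta-\alpha)$ as though $\theta$ were a polar angle rather than the spline parameter, while you correctly get invariance from $\|Q(c_j-\bar c_\mathcal{H})\|=\|c_j-\bar c_\mathcal{H}\|$, with the only shift coming from the start point; the paper also notes that reversing contour orientation only conjugates $\widehat r_\mathcal{H}$, and your step (iv) on the canonical position goes beyond what it proves for this theorem.
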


\begin{proof}
For a radial signature defined in Eq. \ref{eq:radial}, a rotation of the hole by angle $\alpha$ induces a circular shift
$r_\mathcal{H}(\theta)\mapsto r_\mathcal{H}(\theta-\alpha)$. Therefore, the magnitudes of the discrete Fourier transform
(DFT) of $r_\mathcal{H}$ are rotation invariant. Let
\begin{equation}
\widehat r_\mathcal{H}[m] = \sum_{j=1}^{N} r_\mathcal{H}(\theta_j)\,e^{-i2\pi (j-1)m/N},\qquad m=0,\dots,N-1,
\end{equation}
and normalize by the RMS energy of $r_\mathcal{H}$,
\begin{equation}
r_\mathcal{H} \leftarrow \frac{r_\mathcal{H}}{\sqrt{\frac{1}{N}\sum_{j=1}^{N} r_\mathcal{H}(\theta_j)^2}+\varepsilon},
\end{equation}
which yields scale invariance. Also regarding start-point and orientation ambiguities, any reparameterization $\theta \mapsto \theta - \theta_0$ (start point change) yields a circular shift of $r_\mathcal{H}$, and changing contour orientation $\theta \mapsto - \theta$ yields $\widehat r_\mathcal{H}[m] \mapsto \overline{\widehat r_\mathcal{H}[m]}$. Thus, the hole shape descriptor is invariant to both start-point and orientation ambiguities.
\end{proof}


\section{Results}
We evaluate the proposed topological encoding method on two standard vision benchmarks spanning increasing visual complexity: MNIST \citep{lecun2002gradient} and EMNIST \citep{cohen2017emnist}; on an Ubuntu GPU server machine with two NVIDIA A6000 cards and Intel® Xeon(R) Bronze 3206R CPU @ 1.90GHz × 16. MNIST provides a controlled setting of centered grayscale digits, EMNIST extends this regime to a larger and more diverse set of handwritten characters. Specifically, we utilize the letters of the EMNIST dataset, which have small strokes that vanish at low resolution. Together, these datasets allow us to assess both in-distribution accuracy and robustness as the input distribution shifts from clean binary-like shapes to natural scenes. The algorithm pseudocode is provided in Appendix \ref{app_sec:algo} and the hyperparameter settings are provided in Appendix \ref{app_sec:hyper}.

We further stress-test generalization by evaluating under common distribution shifts induced by synthetic corruptions applied to the test sets. This provides a realistic measure of generalization of the method to unseen variations. Specifically, we generate corrupted variants of each test dataset using: (i) in-plane rotations up to a specified maximum angle, (ii) additive Gaussian noise with controlled standard deviation, (iii) salt-and-pepper impulse noise at a fixed flip probability, (iv) cut-out occlusions that mask a contiguous square region of fixed size, and (v) zoom transformations that rescale the foreground content and re-center it on the original canvas. These perturbations target complementary failure modes—geometric misalignment, stochastic pixel noise, sparse outliers, partial occlusion, and scale changes—allowing us to quantify how well the proposed encoding preserves class separability when the input deviates from the training distribution.

\subsection{MNIST dataset}

\begin{table}[htbp!]
\centering
\caption{Test accuracy for Naive HDC and Topology-guided HDC for MNIST dataset before and after online training for 20 epochs under clean and corrupted settings.}
\label{tab:mnist_results_naive_vs}
\begin{tabular}{lcc|cc}
\toprule
& \multicolumn{2}{c|}{\textbf{Naive HDC}} & \multicolumn{2}{c}{\textbf{Topology-guided HDC}} \\
\textbf{Setting} & \textbf{Before train} & \textbf{After train} & \textbf{Before train} & \textbf{After train} \\
\midrule
Baseline (clean)                 & 82.10 & 95.50 & 93.96 & 97.68 \\
Rotation ($20^\circ$)            & 20.50 & 18.30 & 93.89 & 97.61 \\
Gaussian noise ($\sigma=0.1$)    & 6.86 & 9.78 & 82.99 & 88.79 \\
Gaussian noise ($\sigma=0.2$)    & 7.53 & 6.91 & 71.39 & 76.93 \\
Salt-and-pepper ($p=0.1$)        & 43.86 & 65.84 & 72.88    & 77.29     \\
Cut out ($size=4$)               & 62.19 & 44.47 & 93.22 & 97.03 \\
Zoom (scale $=0.75$)             & 19.86 & 14.89 & 87.41 & 92.29    \\
\bottomrule
\end{tabular}
\end{table}

Table \ref{tab:mnist_results_naive_vs} compares a baseline Naive HDC classifier to our Topology-guided HDC under test-time corruptions. Naive HDC, which relies on appearance features, degrades sharply even under mild shifts. Topology-guided HDC augments appearance with geometric/topological descriptors (e.g., global shape and hole structure) and is consistently more robust, both before online adaptation (clean bundled prototypes) and after 20 epochs of OnlineHD-style updates. Overall, these results indicate that explicit topological structure yields stable performance across multiple corruption families.

Table \ref{tab:mnist_results_cnn_vs} further compares Topology-guided HDC to a compact CNN for 28×28 grayscale digit classification. The CNN includes five convolutional layers with batch normalization, pooling, and dropout, followed by a small fully connected head. We report accuracy after training on clean MNIST and testing under corruptions, without corruption-specific augmentation unless noted.
\begin{table}[htbp!]
\centering
\caption{Test accuracy for Topology-guided HDC and CNN for MNIST dataset before and after online training for 20 epochs under clean and corrupted settings.}
\label{tab:mnist_results_cnn_vs}
\begin{tabular}{lcc|c}
\toprule
& \multicolumn{2}{c|}{\textbf{Topology-guided HDC}} & \multicolumn{1}{c}{\textbf{CNN}} \\
\textbf{Setting} & \textbf{Before train} & \textbf{After train} & \textbf{After train} \\
\midrule
Baseline (clean)                 & 93.96 & 97.68 & \textbf{99.1}\\
Rotation ($20^\circ$)           & 93.89 & 97.61 & \textbf{99.16} \\
Gaussian noise ($\sigma=0.1$)    & 82.99 & \textbf{88.79} & 11.35 \\
Gaussian noise ($\sigma=0.2$)   & 71.39 & \textbf{76.93} & 11.01 \\
Salt-and-pepper ($p=0.1$)        & 72.88    & \textbf{77.29} & 40.1   \\
Cut out ($size=4$)               & 93.22 & 97.03 & \textbf{98.52} \\
Cut out ($size=8$)               & 89.72 & \textbf{93.34} & 92.2 \\
Cut out ($size=12$)               & 83.01 & \textbf{86.27} & 81.87 \\
Zoom (scale $=0.75$)             & 87.41 & 92.29 & \textbf{94.36}  \\
Zoom (scale $=0.5$)              & 44.02 & \textbf{58.23} & 52.85 \\
\bottomrule
\end{tabular}
\end{table}

On the clean baseline, the CNN achieves the highest accuracy (99.1\%), as expected given its strong capacity to fit the MNIST distribution. Under additive noise, the CNN collapses to near chance ($\approxeq 11.01\%$ for $\sigma \in \{0.1,0.2\}$), while Topology-guided HDC remains much more robust (88.79\% and 76.93\%) and also outperforms under salt-and-pepper (77.29\% vs. 40.1\%). For cutout, the CNN is slightly better at small occlusions, but Topology-guided HDC degrades more gracefully as occlusion increases (93.34\% at size 8; 86.27\% at size 12). Under zoom, the CNN leads at mild scaling, whereas Topology-guided HDC is more resilient at stronger scaling (58.23\% vs. 52.85\% at scale 0.5).
Taken together, Tables \ref{tab:mnist_results_naive_vs} and \ref{tab:mnist_results_cnn_vs} demonstrate two key findings. First, incorporating topology-guided shape information into an HDC framework yields large robustness gains over a naive HDC baseline across diverse corruptions, even before online adaptation and consistently after online training. Second, while the CNN achieves superior accuracy on clean MNIST, Topology-guided HDC can offer markedly better robustness for several common corruption types—especially those that disrupt local pixel statistics—without requiring corruption-specific training. These results underscore the value of topology-guided representations as a complementary route to robustness, particularly in settings where retraining or extensive augmentation is undesirable.

\subsection{EMNIST dataset}
\begin{table}[htbp!]
\centering
\caption{Test accuracy for Topology-guided HDC and CNN for EMNIST letters dataset before and after online training for 20 epochs under clean and corrupted settings.}
\label{tab:emnist_results_cnn_vs}
\begin{tabular}{lcc|c}
\toprule
& \multicolumn{2}{c|}{\textbf{Topology-guided HDC}} & \multicolumn{1}{c}{\textbf{CNN}} \\
\textbf{Setting} & \textbf{Before train} & \textbf{After train} & \textbf{After train} \\
\midrule
Baseline (clean)                 & 75.38 & 89.37 & \textbf{93.21}\\
Rotation ($20^\circ$)           & 75.55 & 89.25 & \textbf{93.21} \\
Gaussian noise ($\sigma=0.1$)    & \textbf{57.7} & 50.80 & 3.84\\
Salt-and-pepper ($p=0.1$)        & \textbf{42.01} & 40.06 & 31.51 \\ 
Cut out ($size=4$)               & 74.22 & 87.22 & \textbf{91.01} \\
Cut out ($size=8$)               & 70.06 & 82.11 & \textbf{82.34} \\
Cut out ($size=12$)               & 61.44 & \textbf{69.97} & 67.98 \\
Zoom (scale $=0.75$)             & 4.51 & \textbf{5.12} & 3.85 \\
Zoom (scale $=0.5$)              & 4.31 & \textbf{5.08} & 3.84 \\
\bottomrule
\end{tabular}
\end{table}

For the letters of the EMNIST dataset, the Naive HDC performs worse than the MNIST dataset and therefore is not shown. We provide a comparison with the CNN in Table \ref{tab:emnist_results_cnn_vs}. The Table shows that while the CNN achieves the highest accuracy on clean EMNIST letters (93.21\%), Topology-guided HDC provides markedly better robustness to pixel-noise corruptions. In particular, under Gaussian noise ($\sigma=0.1$) and salt-and-pepper noise ($p=0.1$), Topology-guided HDC substantially outperforms the CNN before online training (57.7\% vs. 3.84\% and 42.01\% vs. 31.51\%), indicating that the structural/topological channel remains informative when local appearance cues are degraded. Online training improves clean and cutout performance for Topology-guided HDC (e.g., $75.38\%\rightarrow89.37\%$ on clean; $61.44\%\rightarrow69.97\%$ at cutout size 12), but can reduce robustness to Gaussian noise, suggesting a tradeoff where adaptation to the clean distribution may shift prototypes away from corruption-invariant structure.

\section{Discussion}
This work proposes an explicit topological encoding pipeline for HDC that decomposes a binarized shape into a variable-size set of primitives (outer contour, holes, and their relative geometry), maps each primitive to a hypervector, and aggregates them into a single representation with learned reliability weights across feature types. Across MNIST/EMNIST corruption benchmarks, the resulting topology-guided HDC improves robustness relative to a naive pixel-based HDC baseline, and in several regimes approaches the clean accuracy of a compact CNN while maintaining substantially better stability under specific distribution shifts.

A consistent pattern is that the proposed representation is most beneficial under corruptions that disrupt local pixel statistics but preserve coarse structure: moderate geometric misalignment, partial occlusion, and certain contrast/appearance changes. This aligns with the design of the feature channels:
(i) Outer-shape descriptors (Zernike + spatial pyramid) emphasize global mass distribution and coarse spatial layout after RTS canonicalization, making them less sensitive to small local perturbations than raw pixel binding; (ii) Hole features capture class-relevant topology (e.g., number and relative placement of holes) that is largely invariant under continuous deformations and provides a complementary cue when stroke-level appearance is degraded.

\subsection{Limitations}
However, the results also highlight failure modes. Our approach depends on reliable binarization and primitive extraction (connected components, contours, and holes); under severe noise or low contrast these steps can become unstable (e.g., fragmented boundaries or spurious holes), which can dominate downstream accuracy. Theoretical guarantees cover similarity transforms (rotation/translation/scale) given correct canonicalization, but do not address more complex nuisances such as nonrigid deformations, perspective effects, blur/compression, or domain shifts with strong background clutter. 
Performance may also be sensitive to design choices (thresholding/cleanup, Zernike order, spatial pyramid resolution, number of Fourier coefficients, and reliability-weight calibration), and we do not exhaustively characterize this sensitivity in the main paper. Finally, while HDC inference is lightweight, preprocessing can be the computational bottleneck depending on implementation and image resolution. This suggests that the method’s robustness is limited by the stability of the segmentation stage and by whether the ``topological signal” remains identifiable after corruption.

{
\small
\bibliographystyle{apalike}
\bibliography{bibtex}
}

\newpage
\appendix

\section{Algorithm}
\label{app_sec:algo}

\input{algo}

\newpage
\section{Hyperparameter settings}
\label{app_sec:hyper}
\begin{table}[hbtp!]
\centering
\small
\caption{Hyperparameter settings used in experiments.}
\label{tab:hyperparams}
\begin{tabular}{ll}
\toprule
\textbf{Category} & \textbf{Setting} \\
\midrule
HD representation & Dimension $D=10{,}000$; \\
& bipolar hypervectors in $\{-1,+1\}^D$ \\
Classes & $C=10$ (MNIST) and $C=26$ (EMNIST)\\
Batching & Batch size $B=512$ \\
Online adaptation & OnlineHD-style updates for $E=20$ epochs with step size $\eta=1$ \\
Fusion weights &  $\alpha,\beta$ tuned on validation grid $\{0,0.1,\dots,1.0\}$ \\
Roles (seeds) & $\texttt{ROLE\_OUTER}$ seed $999$; \\
& $\texttt{ROLE\_HOG}$ seed $1000$;\\ 
& $\texttt{ROLE\_HOLES}$ seed $1001$ \\
Random projection & HOG encoder seed $2$;\\
& Zernike encoder seed $1$ \\
Hole-set encoder & Quantization levels $Q=101$;\\ 
& hole feature length $k = k_{\mathrm{shape}} + 4$ with $k_{\mathrm{shape}}=12$ (thus $k=16$);\\
& seed $123$ \\
\midrule
Corruptions (test-time) & Rotation $20^\circ$; \\
& Gaussian noise $\sigma\in\{0.1,0.2\}$; \\
& salt-and-pepper $p=0.1$; \\
& cutout size $\in\{4,8,12\}$; \\
& zoom scale $\in\{0.75,0.5\}$ \\
\bottomrule
\end{tabular}
\end{table}





\end{document}

%% file: algo.tex
\begin{algorithm}[hbtp!]
\DontPrintSemicolon
\SetKwComment{tcp}{$\vartriangleright$}{}
\SetKwInput{KwIn}{Input}
\SetKwInput{KwOut}{Output}

\KwIn{Dataset $(X_{\mathrm{tr}},y_{\mathrm{tr}}),(X_{\mathrm{te}},y_{\mathrm{te}})$; validation fraction $\rho$; corruption spec $\mathcal{C}$; HV dimension $D$; classes $C$; batch size $B$; OnlineHD epochs $E$ and step size $\eta$; hole params $(k_{\mathrm{shape}},k,Q,H_{\max})$; fusion grids $\mathcal{A},\mathcal{B}$.}
\KwOut{Predictions $\hat y_{\mathrm{te}}^{(0)},\hat y_{\mathrm{te}}^{(1)}$; prototypes before/after $(P_h,P_z,P_o)$ and $(P_h',P_z',P_o')$; fusion weights $(\alpha,\beta),(\alpha',\beta')$.}

\BlankLine
$(X_{\mathrm{tr}},y_{\mathrm{tr}},X_{\mathrm{val}},y_{\mathrm{val}})\gets \mathrm{StratifiedSplit}(X_{\mathrm{tr}},y_{\mathrm{tr}};\rho)$\;
$\widetilde X_{\mathrm{te}}\gets \mathrm{ApplyCorruption}(X_{\mathrm{te}};\mathcal{C})$\;

\BlankLine
\ForEach{$S\in\{\mathrm{tr},\mathrm{val},\mathrm{te}\}$}{
  \ForAll{$x \in X_S$}{
    \tcp*[r]{parallel over $x$}
    $x'\gets \mathrm{NormalizeGlyph}(x)$\;
    $f^z(x)\gets \mathrm{SpatialPyramidZernike}(x')$\;
    $f^h(x)\gets \mathrm{HOG}(x')$\;
    $\mathcal{F}^{\mathcal{H}}(x)\gets \mathrm{HoleFeatures}(x';k_{\mathrm{shape}})$\;
  }
  $(HF_S,M_S)\gets \mathrm{PadOrTruncateHoles}(\{\mathcal{F}^{\mathcal{H}}(x)\}_{x\in X_S};H_{\max},k)$\;
}
$HFEATS_{\mathrm{tr}}\gets \mathrm{Flatten}(\{\mathcal{F}^{\mathcal{H}}(x)\}_{x\in X_{\mathrm{tr}}})$\;

\BlankLine
$\mathrm{Enc}_z\gets \mathrm{FitRandomProj}(\{f^z(x)\}_{x\in X_{\mathrm{tr}}};D)$\;
$\mathrm{Enc}_h\gets \mathrm{FitRandomProj}(\{f^h(x)\}_{x\in X_{\mathrm{tr}}};D)$\;
$\mathrm{Enc}_o\gets \mathrm{FitHoleSetHDC}(HFEATS_{\mathrm{tr}};D,Q,k)$\;
$\mathbf r_z,\mathbf r_h,\mathbf r_o \sim \{-1,+1\}^D$ \tcp*[r]{role hypervectors}

\BlankLine
$P_h,P_z,P_o \gets 0_{C\times D}$\;
\ForEach{mini-batch $I\subseteq\{1,\dots,|X_{\mathrm{tr}}|\}$, $|I|\le B$}{
  $H_h\gets \mathrm{Bind}(\mathrm{Enc}_h(f^h[I]),\mathbf r_h)$\;
  $H_z\gets \mathrm{Bind}(\mathrm{Enc}_z(f^z[I]),\mathbf r_z)$\;
  $H_o\gets \mathrm{Bind}(\mathrm{Enc}_o(HF_{\mathrm{tr}}[I],M_{\mathrm{tr}}[I]),\mathbf r_o)$\;
  \tcp{Accumulate class prototypes (by summation)}
  \ForEach{$i\in I$}{
    $c \gets y_{\mathrm{tr}}[i]$\;
    $P_h[c]\gets P_h[c] + H_h[i]$\;
    $P_z[c]\gets P_z[c] + H_z[i]$\;
    $P_o[c]\gets P_o[c] + H_o[i]$\;
  }
}

\BlankLine
$(\alpha,\beta)\gets \arg\max\limits_{a\in\mathcal{A},\,b\in\mathcal{B}}
\mathrm{Acc}\!\left(y_{\mathrm{val}},\mathrm{Predict}(P_h,P_z,P_o,a,b;\mathrm{val})\right)$\;
$\hat y_{\mathrm{te}}^{(0)} \gets \mathrm{Predict}(P_h,P_z,P_o,\alpha,\beta;\mathrm{te})$\;

\BlankLine
\For(\tcp*[r]{OnlineHD updates, per channel}){$e\gets 1$ \KwTo $E$}{
  \ForEach{mini-batch $I$ of size $\le B$ (shuffled)}{
    $H_h\gets \mathrm{Bind}(\mathrm{Enc}_h(f^h[I]),\mathbf r_h)$\;
    $H_z\gets \mathrm{Bind}(\mathrm{Enc}_z(f^z[I]),\mathbf r_z)$\;
    $H_o\gets \mathrm{Bind}(\mathrm{Enc}_o(HF_{\mathrm{tr}}[I],M_{\mathrm{tr}}[I]),\mathbf r_o)$\;

    \ForEach{$i\in I$}{
      \ForEach{$t \in \{h,z,o\}$}{
        $\hat c_t \gets \arg\max_{c\in\{1,\dots,C\}} \cos(H_t[i], P_t[c])$\;
        \If{$\hat c_t \ne y_{\mathrm{tr}}[i]$}{
          $P_t[y_{\mathrm{tr}}[i]] \gets P_t[y_{\mathrm{tr}}[i]] + \eta\, H_t[i]$\;
          $P_t[\hat c_t] \gets P_t[\hat c_t] - \eta\, H_t[i]$\;
        }
      }
    }
  }
}

\BlankLine
$(\alpha',\beta')\gets \arg\max\limits_{a\in\mathcal{A},\,b\in\mathcal{B}}
\mathrm{Acc}\!\left(y_{\mathrm{val}},\mathrm{Predict}(P_h,P_z,P_o,a,b;\mathrm{val})\right)$\;
$\hat y_{\mathrm{te}}^{(1)} \gets \mathrm{Predict}(P_h,P_z,P_o,\alpha',\beta';\mathrm{te})$\;

\Return{$\hat y_{\mathrm{te}}^{(0)},\hat y_{\mathrm{te}}^{(1)},(P_h,P_z,P_o),(\alpha,\beta),(\alpha',\beta')$}\;
\caption{Topology-guided HDC runner with late fusion and OnlineHD adaptation.}
\label{alg:topo_hdc}
\end{algorithm}


%% file: bibtex.bib
@article{smets2024encoding,
  title={An encoding framework for binarized images using hyperdimensional computing},
  author={Smets, Laura and Van Leekwijck, Werner and Tsang, Ing Jyh and Latr{\'e}, Steven},
  journal={Frontiers in big data},
  volume={7},
  pages={1371518},
  year={2024},
  publisher={Frontiers Media SA}
}

@inproceedings{verges2025learning,
  title={Learning encoding phasors with fractional power encoding},
  author={Verg{\'e}s, Pere and Nicolau, Alexandru and Givargis, Tony},
  booktitle={2025 11th International Conference on Computing and Artificial Intelligence (ICCAI)},
  pages={537--543},
  year={2025},
  organization={IEEE}
}

@article{kanerva2009hyperdimensional,
  title={Hyperdimensional computing: An introduction to computing in distributed representation with high-dimensional random vectors},
  author={Kanerva, Pentti},
  journal={Cognitive computation},
  volume={1},
  number={2},
  pages={139--159},
  year={2009},
  publisher={Springer}
}

@article{kleyko2023survey,
  title={A survey on hyperdimensional computing aka vector symbolic architectures, part ii: Applications, cognitive models, and challenges},
  author={Kleyko, Denis and Rachkovskij, Dmitri and Osipov, Evgeny and Rahimi, Abbas},
  journal={ACM Computing Surveys},
  volume={55},
  number={9},
  pages={1--52},
  year={2023},
  publisher={ACM New York, NY}
}

@inproceedings{rahimi2016robust,
  title={A robust and energy-efficient classifier using brain-inspired hyperdimensional computing},
  author={Rahimi, Abbas and Kanerva, Pentti and Rabaey, Jan M},
  booktitle={Proceedings of the 2016 international symposium on low power electronics and design},
  pages={64--69},
  year={2016}
}

@article{rosenblatt1958perceptron,
  title={The perceptron: a probabilistic model for information storage and organization in the brain.},
  author={Rosenblatt, Frank},
  journal={Psychological review},
  volume={65},
  number={6},
  pages={386},
  year={1958},
  publisher={American Psychological Association}
}

@inproceedings{nguyen2015deep,
  title={Deep neural networks are easily fooled: High confidence predictions for unrecognizable images},
  author={Nguyen, Anh and Yosinski, Jason and Clune, Jeff},
  booktitle={Proceedings of the IEEE conference on computer vision and pattern recognition},
  pages={427--436},
  year={2015}
}

@article{carlsson2020topological,
  title={Topological methods for data modelling},
  author={Carlsson, Gunnar},
  journal={Nature Reviews Physics},
  volume={2},
  number={12},
  pages={697--708},
  year={2020},
  publisher={Nature Publishing Group UK London}
}

@article{carlsson2008local,
  title={On the local behavior of spaces of natural images},
  author={Carlsson, Gunnar and Ishkhanov, Tigran and De Silva, Vin and Zomorodian, Afra},
  journal={International journal of computer vision},
  volume={76},
  number={1},
  pages={1--12},
  year={2008},
  publisher={Springer}
}

@article{naitzat2020topology,
  title={Topology of deep neural networks},
  author={Naitzat, Gregory and Zhitnikov, Andrey and Lim, Lek-Heng},
  journal={Journal of Machine Learning Research},
  volume={21},
  number={184},
  pages={1--40},
  year={2020}
}

@article{zhou2017exploring,
  title={Exploring generalized shape analysis by topological representations},
  author={Zhou, Zhen and Huang, Yongzhen and Wang, Liang and Tan, Tieniu},
  journal={Pattern Recognition Letters},
  volume={87},
  pages={177--185},
  year={2017},
  publisher={Elsevier}
}

@conference{edelsbrunner2008persistent,
  author = {Edelsbrunner, H. and Harer, J.},
  booktitle = {Surveys on discrete and computational geometry},
  organization = {Amer Mathematical Society},
  pages = 257,
  title = {{Persistent homology -— a survey}},
  volume = 453,
  year = 2008
}

@article{novotni2004shape,
  title={Shape retrieval using 3D Zernike descriptors},
  author={Novotni, Marcin and Klein, Reinhard},
  journal={Computer-Aided Design},
  volume={36},
  number={11},
  pages={1047--1062},
  year={2004},
  publisher={Elsevier}
}

@article{li2008complex,
  title={Complex Zernike moments features for shape-based image retrieval},
  author={Li, Shan and Lee, Moon-Chuen and Pun, Chi-Man},
  journal={IEEE Transactions on Systems, Man, and Cybernetics-Part A: Systems and Humans},
  volume={39},
  number={1},
  pages={227--237},
  year={2008},
  publisher={IEEE}
}

@article{lecun2002gradient,
  title={Gradient-based learning applied to document recognition},
  author={LeCun, Yann and Bottou, L{\'e}on and Bengio, Yoshua and Haffner, Patrick},
  journal={Proceedings of the IEEE},
  volume={86},
  number={11},
  pages={2278--2324},
  year={2002},
  publisher={Ieee}
}

@inproceedings{cohen2017emnist,
  title={EMNIST: Extending MNIST to handwritten letters},
  author={Cohen, Gregory and Afshar, Saeed and Tapson, Jonathan and Van Schaik, Andre},
  booktitle={2017 international joint conference on neural networks (IJCNN)},
  pages={2921--2926},
  year={2017},
  organization={IEEE}
}
